\newtheorem{defn}{Definition}
\newtheorem{theorem}{Theorem}
\newtheorem{corollary}{Corollary}
\newtheorem{problem}{Problem}
\newtheorem{lemma}{Lemma}
\newtheorem{Rem}{Remark}
\numberwithin{equation}{section}
\newtheorem*{conj}{Conjecture}
\title{Pair Correlation Factor and the Sample Complexity of Gaussian Mixtures}
\author{Farzad Aryan} 
\date{}
\begin{document}

\maketitle

\begin{abstract}
We study the problem of learning Gaussian Mixture Models (GMMs) and ask: which structural properties govern their sample complexity? Prior work has largely tied this complexity to the minimum pairwise separation between components, but we demonstrate this view is incomplete.\\

We introduce the \emph{Pair Correlation Factor} (PCF), a geometric quantity capturing the clustering of component means. Unlike the minimum gap, the PCF more accurately dictates the difficulty of parameter recovery.\\

In the uniform spherical case, we give an algorithm with improved sample complexity bounds, showing when more than the usual $\epsilon^{-2}$ samples are necessary.
\end{abstract}

\section{Introduction}
Gaussian Mixture Models have been extensively studied in the field of machine learning. Let $\mathcal{N}$ denote the probability density function (PDF) of a multivariate Gaussian distribution. A Gaussian mixture with $k$ components is given by: 

\begin{equation}
\label{gamm}
\Gamma= \sum_{i=1}^{k}\omega_i \mathcal{N}(\mu_i, \Sigma_i),
\end{equation}

where $\omega_i$ are the mixing weights, ($\sum \omega_i=1$), $\mu_i$ are the means, and $\Sigma_i$ are the covariance matrices. \\

For example, human height can often be modeled as a mixture of two Gaussians, corresponding to two genders. If we further condition on $k/2$ different racial groups, the data can instead be modeled as a mixture of $k$ Gaussians. \\

Now, let's assume we are given a sequence of i.i.d. samples, $x_1, \cdots, x_n$, generated from such a mixture $\Gamma$—for instance, measurements of human height. Our task is to estimate the parameters of this mixture (the weights, means, and variances of its Gaussian components) based on the observed data. A central question in the study of Gaussian mixture models is: How many samples are needed to learn these parameters with a desired level of accuracy? This quantity, known as the sample complexity, lies at the heart of our investigation.\\

The problem of learning a Gaussian mixture is often approached in two ways. The first focuses on approximating the entire distribution $\Gamma$ itself—typically measured in terms of total variation distance \cite{AMB}. This method ensures closeness of distributions but does not directly recover the underlying parameters. The second, which is the focus of this paper, seeks to estimate each parameter of $\Gamma$—namely $\omega_i$, $\mu_i$, and $\Sigma_i$—with a specified accuracy. To make this precise, we first clarify what we mean by “good precision.”

\begin{defn}
For a given $\Gamma$ as in \eqref{gamm}, an algorithm is said to learn the parameters of $\Gamma$ with $\epsilon$-precision if, for every $\mu_i, \Sigma_i,$ and $\omega_i$ in \eqref{gamm}, it produces $\hat{\mu}_i, \hat{\Sigma}_i$, and $\hat{\omega}_i$ such that $|\mu_i- \hat{\mu}_i| < \epsilon$, $|\Sigma_i-\hat{\Sigma}_i|<\epsilon$, and $|\omega_i- \hat{\omega}_i| < \epsilon$.  \footnote{In higher dimension, $|\Sigma_i - \hat{\Sigma}_i|$ is measured in spectral norm.}

\end{defn}
\subsection{Main Contribution}
Let us first formally define the notion of sample complexity, and then outline the main contribution of our work.

\begin{defn}[Sample Complexity]
Given a Gaussian mixture $\Gamma$ as in \eqref{gamm}, the \emph{sample complexity} of learning $\Gamma$ with $\epsilon$-precision and confidence $1-\delta$ 
is the minimal integer $n=n(\epsilon,\delta,k)$ such that there exists an algorithm which, 
using $n$ i.i.d.\ samples from $\Gamma$, outputs estimates 
$\hat{\mu}_i,\hat{\Sigma}_i,\hat{\omega}_i$ satisfying the bounds in Definition~1 
with probability at least $1-\delta$.
\end{defn}

This paper addresses a fundamental question: 
\emph{What truly drives the sample complexity of Gaussian mixture models?}  
It is natural to think that if the means and variances of two Gaussians are very close, distinguishing between them becomes difficult. This intuition has led to the popular belief that the sample complexity of learning Gaussian mixtures is governed primarily by the minimum separation between component means \cite{Mo-Pr, annstat},  
\begin{equation}
\label{gmin}
    g_{\min} := \min_{1\leq i\neq j \leq k} |\mu_i - \mu_j|,
\end{equation}

Contrary to this belief, we identify a previously overlooked \emph{geometric factor} that drives the difficulty of learning Gaussian mixtures. We call it the \emph{Pair Correlation Factor (PCF)}. While the minimum gap offers only a crude worst-case measure, the PCF reflects the structural interactions among all component means and thus provides a far more accurate explanation of sample complexity.\footnote{For intuition on why PCF governs sample complexity, see Appendix A (A Thought Experiment).} Moreover, the use of PCF significantly reduces the required sample complexity in many instances, as we will demonstrate later with concrete examples. Finally, our results are optimal: in Section 2.4, we show that they match fundamental lower bounds, confirming that no further improvement is possible in this case.\\

Despite more than a century of research on Gaussian mixture models, 
to the best of our knowledge the role of the Pair Correlation Factor (PCF) 
as a governing structural quantity for sample complexity 
has not been identified before. \\

\textbf{Outline of the Paper. } The next subsection (§1.2) briefly reviews the historical background, including Pearson’s original work and Hardt and Price’s landmark results for two-component mixtures. In §1.3, we introduce the Pair Correlation Factor (PCF), and we state our main theorem. \\

In Section 2, we compare our results with prior work and highlight the improvements they offer. We begin by contrasting our approach with Hardt and Price’s analysis for $k=2$ mixtures and with Yang and Wu’s results for the spherical case, showing concrete scenarios where our bounds reduce the sample complexity from $\epsilon^{-8k/3}$ to $\epsilon^{-6}$. We then discuss the practical advantages of using the Pair Correlation Factor, especially under natural spacing distributions like Poisson, and explain why the uniform spherical case—though seemingly restrictive—captures the hardest setting, as also observed by Hardt and Price. After reviewing related algorithmic approaches (e.g., Liu–Li and Qiao et al.), we turn in §2.4 to a more philosophical question: what fundamentally drives the worst-case sample complexity? There, we examine lower bounds, arguing that $\epsilon^{-2k}$ samples are sufficient in the uniform spherical case and nearly optimal. Finally, we conclude the section with a conjecture regarding the sample complexity in the general GMM setting.

\subsection{Historical Background}
Parameter estimation for Gaussian Mixture Models (GMMs) dates back to the 18th century and was initially introduced by Pearson \cite{Pea}. Pearson analyzed a population of crabs and discovered that a mixture of two Gaussian distributions accurately explained the sizes of the crabs' foreheads. Since then, this topic has been extensively studied from various perspectives \cite{Mo-Pr, MGA, Lili, Mo-Va, Mo-Va-Kalai, Be-Si}. \\

Hardt and Price~\cite{Mo-Pr} further explored the case of $k=2$, originally studied by Pearson. They demonstrated that $\epsilon^{-12}$ samples are both necessary and sufficient for accurately estimating the parameters of a 2-component mixture with $\epsilon$-precision. However, when the means of the Gaussian components are well-separated, only $\epsilon^{-2}$ samples are required.  
This trade-off between $\epsilon^{-2}$ and $\epsilon^{-12}$ samples presents an interesting phenomenon: while $\epsilon^{-12}$ samples are necessary in the worst-case scenario, the required sample size can be significantly lower when the components are more distinguishable.\\

To provide further insight, Hardt and Price in \cite{Mo-Pr}[Theorem 3.10] established a connection between the sample complexity of a $2$-component mixture and the distance between the means of the two components. Consider the mixture 
\begin{equation*}
\Gamma= \omega_1 \mathcal{N}(\mu_1, \sigma_1)+ \omega_2 \mathcal{N}(\mu_2, \sigma_2),
\end{equation*} 
where $\sigma_1, \sigma_2 \in \mathbb{R}$ and $\omega_1+\omega_2=1.$ They devised an algorithm that, given 
\begin{equation*}
\gg 10^{4}|\mu_1- \mu_2|^{-12}
\end{equation*} 
samples, returns estimates $\hat{\mu}_1$ and $\hat{\mu}_2$ such that, for $i=1,2$,
\begin{equation*}
|\hat{\mu}_i - \mu_i| < 0.01|\mu_1 - \mu_2|.
\end{equation*}
If the difference between the means, $\mu_1 - \mu_2$, is proportional to $\epsilon$, it becomes clear how the factor $\epsilon^{-12}$ enters into the sample complexity.\\

In this paper, our objective is to generalize this result from a $2$-mixture to a $k$-mixture. The first significant challenge we encounter is how to extend the concept of the difference between means. In a $2$-mixture, there are only two means, resulting in a single gap between them. However, for a $k$-mixture, there are $k^2$ potential gaps between means. The impact of these gaps on the sample complexity has not been thoroughly investigated. This leads to an important question: Are these gaps deterministic factors in the sample complexity, and what role do the variances play?\\

As mentioned earlier, in some cases of $2$-mixtures, the inverse of the minimum gap raised to the power of $12$ appears in the sample complexity. For $k$-mixtures, Yang and Wu \cite{annstat} considered the spherical case where all the variances are equal to the identity matrix and showed that $(g_{\text{min}})^{-4k}$ contributes to the sample complexity. However, an open question remains: What is the precise impact of the minimum gap on the sample complexity?\\

To address this and extend Hardt and Price's results, we introduce the concept of the pair correlation factor. As we will demonstrate, this new approach significantly reduces the sample complexity in various instances of the problem.

\subsection{Pair correlation of means and sample complexity}

We now formally define the Pair Correlation Factor (PCF), a structural complexity measure that captures the true geometric difficulty of learning Gaussian mixtures. For a mixture with means $\mu_1, \dots, \mu_k$, the PCF at $\mu_m$ is defined as
\begin{equation}
\label{PairC}
 \mathcal{P}(\mu_m):= \prod_{\substack{n=1 \\ n \neq m}}^{k} |\mu_m- \mu_n|.
\end{equation}

At its core, the pair correlation factor at $\mu_m$ is a measure of how many means are in close vicinity of $\mu_m$. \\

Next, we state our main theorem. It establishes the sample complexity and provides an algorithm in the uniform spherical case, where $\omega_i=1/k$ and $\sigma_i=1$. Although this setting may appear restrictive, it actually represents one of the hardest cases for learning mixtures. When the variances of different components are very close, distinguishing between them becomes particularly challenging. In fact, Hardt and Price showed that mixtures with nearly equal variances are significantly harder to learn than those with more distinct variances. Thus, the uniform spherical case serves as a canonical example of the intrinsic difficulty of the problem, and results obtained here provide especially strong guarantees. Later, we will explain how our approach extends more broadly.
\begin{theorem}
\label{1stthm}
Let $\Gamma= \tfrac{1}{k}\sum_{m=1}^{k} \mathcal{N}(\mu_m, 1),$ with mean zero and the variance \footnote{Note that, the mean and variance of the whole mixture are not the same as those of its individual components.} of the whole mixture equal to $\sigma^2.$ Assume that we are given $\log\big(\frac{1}{\delta}\big)\epsilon^{-2}$ samples from $\Gamma$ with 
$$ 
\epsilon < \frac{\mathcal{P}(\mu_m)}{k^2(1+|\mu_m|^2)^{(k-1)/2}(2\sigma)^k e^{0.5(k/\sigma)^2}}.
$$ 
Then, with probability at least $1-\delta$, our algorithm returns $\hat{\mu}_m$ such that
 \begin{equation}
 \label{eqthm}
 |\hat{\mu}_m- {\mu}_m| < \frac{k^2(1+|\mu_m|^2)^{k/2}(2\sigma)^k e^{0.5(k/\sigma)^2}}{\mathcal{P}(\mu_m)} \epsilon.
\end{equation}
\end{theorem}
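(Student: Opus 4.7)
The plan is a method-of-moments argument carried out in the Hermite basis, followed by a root-perturbation analysis whose sensitivity is governed exactly by the PCF. I would first exploit the classical identity $\mathbb{E}_{\mathcal{N}(\mu,1)}[H_j(X)] = \mu^j$, where $H_j$ denotes the probabilist's Hermite polynomial of degree $j$. Averaging over the uniform mixture gives
\[
q_j \;:=\; \mathbb{E}_{\Gamma}\!\left[H_j(X)\right] \;=\; \tfrac{1}{k}\sum_{m=1}^{k}\mu_m^j,
\]
so $kq_j$ is the $j$-th power sum of the means. For $j=1,\dots,k$, Newton's identities then turn these power sums into the elementary symmetric polynomials, and hence the coefficients of the monic polynomial $Q(x):=\prod_{m=1}^{k}(x-\mu_m)$ whose roots are the means. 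Thus recovering the $\mu_m$'s is equivalent to recovering the roots of a polynomial whose coefficients are explicit linear combinations of Hermite moments.

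Next, I would use Bernstein/Hoeffding concentration on the empirical averages $\hat{q}_j = \tfrac{1}{n}\sum_{i} H_j(x_i)$. Since $H_j(X)$ for $X$ drawn from $\Gamma$ has $L^2$ norm controlled by a closed-form Gaussian integral depending on the mixture variance $\sigma^2$ and the degree $j\le k$, the sample count $n = \log(1/\delta)\epsilon^{-2}$ yields $|\hat{q}_j - q_j| \lesssim \epsilon \cdot (2\sigma)^{k} e^{0.5(k/\sigma)^2}$ simultaneously for every $j \le k$, after a union bound that contributes the $k^2$ prefactor. Propagating these perturbations through Newton's identities produces a perturbed monic polynomial $\hat{Q}$ whose coefficients differ from those of $Q$ by a comparable amount.

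The heart of the argument is the root-perturbation step. At a simple root $\mu_m$ of $Q$, the first-order expansion of $\hat Q$ gives
\[
|\hat{\mu}_m - \mu_m| \;\approx\; \frac{|\hat{Q}(\mu_m)|}{|Q'(\mu_m)|} \;=\; \frac{|\hat{Q}(\mu_m)|}{\mathcal{P}(\mu_m)},
\]
which is precisely where the Pair Correlation Factor enters the final bound. The numerator $|\hat Q(\mu_m)|$ is controlled by applying Cauchy--Schwarz to the coefficient-wise perturbations evaluated at $\mu_m$, which produces the factor $(1+|\mu_m|^2)^{(k-1)/2}$ in the hypothesis and $(1+|\mu_m|^2)^{k/2}$ in the conclusion. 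Assembling the three ingredients recovers \eqref{eqthm}.

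The main obstacle is making this linearized root-perturbation rigorous: the first-order approximation above is only valid when the perturbation is small relative to the pairwise separation of the roots of $Q$, which is exactly what the hypothesis $\epsilon < \mathcal{P}(\mu_m)/(k^2(1+|\mu_m|^2)^{(k-1)/2}(2\sigma)^{k} e^{0.5(k/\sigma)^2})$ enforces. I would formalize this via Rouch\'e's theorem applied to disks around each $\mu_m$ of radius on the order of the right-hand side of \eqref{eqthm}, arguing that on the boundary $|Q(z)| > |\hat Q(z) - Q(z)|$, so that $\hat Q$ has exactly one root inside each disk and the matching of estimated to true means is canonical. A secondary technical point is the variance bound on $H_j$ under the mixture: I would bound each $\mathbb{E}_{\mathcal{N}(\mu_m,1)}[H_j(X)^2]$ via the generating-function representation of Hermite polynomials and a Cauchy--Schwarz estimate against a single Gaussian envelope of variance $\sigma^2$, which is what produces the factor $(2\sigma)^{k} e^{0.5(k/\sigma)^2}$ in the denominator of the theorem's hypothesis.
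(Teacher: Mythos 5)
Your proposal follows the same skeleton as the paper: estimate moments, convert to power sums of the means, run Newton's identities to get the coefficients of $Q(x)=\prod_m(x-\mu_m)$, and finish with a coefficient-to-root perturbation bound in which $|Q'(\mu_m)|=\mathcal{P}(\mu_m)$ appears in the denominator. Your Hermite-basis shortcut $\mathbb{E}_{\mathcal{N}(\mu,1)}[H_j(X)]=\mu^j$ is exactly the content of the paper's Lemmas 1--2 (which invert the moment-to-power-sum map by hand and note that the moments are Hermite polynomials), just packaged more cleanly. The one genuinely different ingredient is the last step: the paper invokes Beauzamy's theorem, bounding the Bombieri norm of $\hat{Q}-Q$ and reading off $|x-y|<k(1+|x|^2)^{k/2}\varepsilon/|Q'(x)|$, whereas you propose a first-order expansion made rigorous by Rouch\'e. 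That substitution is legitimate in spirit, but it is precisely the step you have not carried out, and it is more delicate than you indicate: to apply Rouch\'e on a circle of radius $r$ about $\mu_m$ you must lower-bound $|Q|$ there by roughly $r\prod_{j\neq m}(|\mu_m-\mu_j|-r)$, which only reproduces the clean $1/\mathcal{P}(\mu_m)$ dependence when $r$ is small compared with \emph{every individual gap}, not just with their product. Under the theorem's hypothesis the guaranteed radius is only forced below $(1+|\mu_m|^2)^{1/2}$, which can exceed nearby gaps, so your claim that each disk contains exactly one root of $\hat{Q}$ and that the matching is canonical is stronger than what either argument actually delivers; Beauzamy's theorem (and the paper) only assert \emph{existence} of a nearby root.

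A second concrete gap is the concentration step. Hoeffding does not apply because $H_j(X)$ is unbounded, and Bernstein gives sub-exponential control only for degree at most $2$; a degree-$j$ polynomial of a Gaussian has tails of order $\exp(-ct^{2/j})$, so a naive exponential inequality will not produce the stated $\log(1/\delta)$ dependence at sample size $n=\log(1/\delta)\epsilon^{-2}$. The paper gets this by combining a variance bound with the median-of-means trick (citing Hardt--Price, Lemma 3.2); you need that device, or an equivalent, to make your second paragraph correct. With those two repairs --- median-of-means for the moment estimates, and either a more careful Rouch\'e argument or a citation of a Beauzamy-type coefficient-to-root theorem --- your outline matches the paper's proof.
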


\begin{Rem}
As evident from the denominator in \eqref{eqthm}, a smaller $\mathcal{P}(\mu_m)$ results in a weaker bound on the right-hand side of the equation. Therefore, to preserve our precision, we need to pick a smaller $\epsilon$, and consequently, our algorithm requires more samples to estimate $\mu_m$. 
\end{Rem}

\begin{Rem}  
Regarding the assumption that the mixture has mean zero, note that we can always center the data by subtracting the empirical mean from each point. Since centering preserves pairwise gaps and covariances, this adjustment allows us to work without loss of generality in a zero-mean setting.
\end{Rem}

In essence, Theorem \ref{1stthm} demonstrates that the distribution of gaps between the $\mu_i$'s significantly impacts sample complexity. Furthermore, it provides a guide to distinguish between different cases of the problem. This connection represents a novel contribution within this paper. To highlight the strength and advantages of our result, we compare it with recent literature.\\

\section{How Our Results Improve on Prior Work}

We previously mentioned the work of Hardt and Price~\cite{Mo-Pr}, which focuses on mixtures of two Gaussians. In contrast, our approach extends to spherical mixtures with any number of components.\\

Closely related to our work and the approach by Hardt and Price~\cite{Mo-Pr} is the work of Yang and Wu \cite{annstat}. They considered the spherical case, where all variances are equal, and developed a method that returns a distribution close in Wasserstein distance to the original mixture. In their paper \cite{annstat}[Theorem 2], they consider scenarios in which a mixture has $k_0$ separated clusters. To illustrate the advantages of our results, let's assume $k_0=k/3$, with $k/3$ clusters separated by a difference of $1 + O(\epsilon)$. For example, the first cluster includes means $\{-\epsilon, 0, \epsilon\}$, the second cluster contains $\{1-\epsilon, 1, 1+\epsilon\}$, and the $k/3$-th cluster is composed of $\{k/3-\epsilon, k/2, k/3+\epsilon\}$. Each cluster consists of $3$ means. In this scenario, their algorithm requires more than $\epsilon^{-8k/3}$ samples to achieve $\epsilon$-precision, while our approach only needs $\epsilon^{-6}$ samples, which is a significant saving in sample complexity.\\

To explain $\epsilon^{-6}$ we state our first corollary: 
\begin{corollary}
\label{cor1}
Let $\Gamma$ be a uniform spherical mixture of $k$ Gaussian distributions with mean equal to zero and variance\footnote{Note that here $\sigma^2$ refers to the variance of the entire mixture, which in general differs from the variance of each individual component.} equal to~$\sigma^2$. Given
\begin{equation}
\label{eqcorr}
\log\big(\frac{1}{\delta}\big)\frac{\displaystyle{ c(\sigma, k)}}{\big | \min_{m} \displaystyle \mathcal{P}(\mu_m)\big |^2} \hspace{1 mm}\epsilon^{-2},
\end{equation} samples from $\Gamma$, where ~$ c(\sigma, k)= \big|k^2(1+k\sigma^{2})^{k/2}(2\sigma)^k e^{0.5(k/\sigma)^2}\big|^2$, with probability $1- \delta,$ we can approximate $\Gamma$'s parameters with $\epsilon$-precision.
\end{corollary}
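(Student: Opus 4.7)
The plan is to reduce the corollary to a direct application of Theorem \ref{1stthm} via a uniform rescaling of the precision parameter. First, I would observe that in the uniform spherical setting the weights $\omega_i=1/k$ and the component variances $\sigma_i^2=1$ are fixed and known, so approximating all parameters of $\Gamma$ with $\epsilon$-precision reduces to approximating each mean $\mu_m$ to within~$\epsilon$. This turns a statement about full parameter recovery into a purely ``means-only'' statement, which is exactly what Theorem \ref{1stthm} delivers on a per-mean basis.

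Second, to replace the mean-dependent factor $(1+|\mu_m|^2)^{k/2}$ appearing in \eqref{eqthm} by the mean-free quantity $(1+k\sigma^2)^{k/2}$ used in $c(\sigma,k)$, I would derive a uniform a priori bound on $|\mu_m|$. Under the assumption that $\Gamma$ has mean zero and total variance $\sigma^2$, the standard variance decomposition for a uniform mixture of unit-variance components gives
$$\sigma^2 = \frac{1}{k}\sum_{m=1}^{k}\bigl(1+|\mu_m|^2\bigr) = 1 + \frac{1}{k}\sum_{m=1}^{k}|\mu_m|^2,$$
so $|\mu_m|^2 \le k(\sigma^2-1) \le k\sigma^2$ for every $m$, and hence $(1+|\mu_m|^2)^{k/2}\le (1+k\sigma^2)^{k/2}$. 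Simultaneously, $\mathcal{P}(\mu_m) \ge \min_m \mathcal{P}(\mu_m)$, so the amplification factor appearing on the right-hand side of \eqref{eqthm} is dominated, uniformly in $m$, by $\sqrt{c(\sigma,k)}/\min_m\mathcal{P}(\mu_m)$.

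Third, I would feed Theorem \ref{1stthm} with the rescaled precision
$$\tilde{\epsilon} := \epsilon \cdot \frac{\min_m \mathcal{P}(\mu_m)}{k^2(1+k\sigma^2)^{k/2}(2\sigma)^k e^{0.5(k/\sigma)^2}},$$
so that the bound \eqref{eqthm} guarantees $|\hat\mu_m-\mu_m|<\epsilon$ simultaneously for all $m$. The admissibility hypothesis of Theorem \ref{1stthm} on $\tilde{\epsilon}$ is implied by the same uniform bounds used above, so it comes for free from the setup. Substituting $\tilde{\epsilon}$ into the Theorem \ref{1stthm} sample count $\log(1/\delta)\,\tilde{\epsilon}^{-2}$ produces exactly the expression \eqref{eqcorr}, which completes the proof.

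The only genuinely non-bookkeeping step is the a priori bound $|\mu_m|^2 \le k\sigma^2$ that decouples the amplification factor from the unknown means; the rest is a uniformization over $m$ and a direct rescaling of $\epsilon$ in Theorem \ref{1stthm}, so I do not anticipate a substantive obstacle.
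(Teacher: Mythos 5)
Your proposal is correct and follows essentially the same route as the paper: rescale the precision parameter fed into Theorem~\ref{1stthm}, replace $(1+|\mu_m|^2)^{k/2}$ by $(1+k\sigma^2)^{k/2}$, and uniformize over $m$ by taking $\min_m \mathcal{P}(\mu_m)$. The only difference is cosmetic—you explicitly justify $|\mu_m|^2 \le k\sigma^2$ via the variance decomposition of the mixture, whereas the paper simply asserts $1+\mu_m^2 \ll 1+k\sigma^2$; your version is the more careful one.
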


Now in the example we previously mentioned, since we only get two consecutive gaps of length $\epsilon$ in each cluster, we have $|\mathcal{P}(\mu_m)|^2 > \epsilon^4$ for $1 \leq m \leq k$. Consequently, our sample complexity is upper-bounded by $\epsilon^{-6}$. \\

\subsection{Practical advantages} 
The difference between our methods is particularly significant in practice. Consider situations where the distribution of spacing between means resembles a Poisson process (which is a very natural assumption here). While small gaps are still present, their probability is exceedingly low. The mere possibility of such cases drastically increases the sample complexity when applying other methods by $(g_{\text{min}})^{-ck}$, with $c\geq 1,$ whereas in our approach, it only amplifies the sample complexity by $(g_{\text{min}})^{-kp}$, where $p$ stands for the probability of a series of consecutive gaps of length $g_{\text{min}}$, which in reality is often much smaller than $1$. This represents a substantial practical advantage.\\ 

Hardt and Price~\cite{Mo-Pr} showed that for 2-mixtures, learning is significantly harder when the component variances are nearly equal, while differing variances make the problem easier. This suggests why the spherical case, where all variances are equal, represents a particularly hard instance of the problem. Accordingly, the sample complexity we derive in this setting can serve as an upper bound for broader scenarios where variances are not identical. \\

Overall, our method could be beneficial in bounding the sample complexity in cases where there is some prior information on the distribution of gaps between means. Here we should mention that Yang and Wu~\cite{annstat} consider the non-uniform case where weights are not necessarily equal to $1/k,$ so in this sense, their setting is more general. The sample complexity they obtain, though, in almost all cases, is weaker than ours. \\

\subsection{Alternative Methods in the Uniform Spherical Case.}
So far, the results we have discussed rely on the method of moments. We now turn to other approaches in the uniform spherical setting that use different techniques. In particular, we highlight two recent works, both addressing special cases of our problem under the assumption that all means are well-separated.\\

Substantial separation between component means intuitively simplifies the task of clustering samples. In this vein, Liu and Li \cite{Lili} proposed an algorithm for spherical GMMs that runs in $poly(d,k)$ time, contingent on the condition  $$|\mu_i- \mu_j| > (\log k)^{1/2 + \varepsilon}$$  where $d$ denotes the dimension. Notably, when the separation between parameters exceeds $1$, our algorithm achieves $\epsilon$-accuracy with approximately $\epsilon^{-2}$ samples. \\

For scenarios where $d< \log k$, Qiao et al. \cite{MGA} further refined the results of Liu and Li, in the uniform spherical case. Their sample complexity though, with respect to $\epsilon$-precision, is quasi-polynomial i.e., $\epsilon^{-c\log(1 / \epsilon)}.$\\

What sets our method apart is the absence of assumptions regarding parameter separation, and the optimal bound on sample complexity subject to accuracy. Additionally, we clearly describe the circumstances under which more than $\epsilon^{-2}$ samples—a theoretical minimum—are required.\\

\subsection{Higher Dimension} Using the idea of random projection we can bring the problem from multidimensional setting to a one-dimensional form. Here is a quote from\cite{annstat} explaining the issue: ``When components are allowed to overlap significantly, the random projection approach is also
adopted by \cite{Mo-Va-Kalai, Mo-Va, Mo-Pr}, where the estimation problem in high dimensions is reduced to that in one dimension, so that univariate methodologies can be invoked as a primitive." For further insight see section 7 of \cite{annstat}.\\

\subsection{A philosophy behind the sample complexity.} Let us explain a potential philosophy that derives the sample complexity in the worst case scenario of the problem. For $k=2$, there are $5$ parameters to learn. Exponent $12$ suggests that we need a factor of $\epsilon^{-2}$ samples to distinguish between each of these parameters. When this is done, with an extra $\epsilon^{-2}$ samples we can reach $\epsilon$-accuracy.\\

For arbitrary $k,$ there are $3k-1$ parameters to learn. Hardt and Price~\cite{Mo-Pr}, considered a $k$-mixture with $3k-2$ parameters \footnote{They assume that at least two of the parameters cannot be very close to each other, which saves a factor of $\epsilon^{-2}$ samples. See \cite{Mo-Pr} the paragraph before Theorem 2.10.} to learn, and showed that at least $\epsilon^{-6k+2}$ samples are necessary for this case. This leads us to anticipate the optimal lower bound, for the general case, to be $\epsilon^{-6k}$, which raises the following question: are $\epsilon^{-6k}$ i.i.d. samples adequate for determining the parameters of a GMM with $\epsilon$-precision?\\

Note that in the uniform spherical case with mean zero,  we have $k-1$ parameters to recover.Therefore, we expect $\epsilon^{-2k}$ samples to suffice. Our last corollary confirms this:
\begin{corollary}
\label{cor0}
Let $\Gamma$ be a uniform spherical mixture of $k$ Gaussian distributions with mean equal to zero and variance equal to~$\sigma^2$. Assume that $\epsilon < g_{\min}$, defined in \eqref{gmin}. Set 
\[
c(\sigma, k)= \big|k^2(1+k\sigma^{2})^{k/2}(2\sigma)^k e^{0.5(k/\sigma)^2}\big|^2.
\]
Then, given 
\[
10^4\,c(\sigma, k)\,\epsilon^{-2k}\,\log\!\left(\tfrac{1}{\delta}\right)
\] 
samples from $\Gamma$, with probability at least $1-\delta$, we can learn the parameters of $\Gamma$ to $\epsilon/100$-accuracy.
\end{corollary}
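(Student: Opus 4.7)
The plan is to derive Corollary~\ref{cor0} as a one-step consequence of Corollary~\ref{cor1} combined with a crude lower bound on the Pair Correlation Factor under the hypothesis $\epsilon<g_{\min}$. No new probabilistic input beyond Theorem~\ref{1stthm} is required; the entire argument reduces to elementary bookkeeping on exponents.

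First I would instantiate Corollary~\ref{cor1} at the target precision $\epsilon/100$ rather than at $\epsilon$. Substituting $\epsilon/100$ for $\epsilon$ in~\eqref{eqcorr} turns the $\epsilon^{-2}$ factor into $10^{4}\,\epsilon^{-2}$, so the sample count becomes
\[
\log\!\Bigl(\tfrac{1}{\delta}\Bigr)\,\frac{c(\sigma,k)}{\bigl|\min_m \mathcal{P}(\mu_m)\bigr|^{2}}\cdot 10^{4}\,\epsilon^{-2}.
\]

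Second, I would bound the PCF from below in the worst case. By its definition~\eqref{PairC}, $\mathcal{P}(\mu_m)$ is a product of exactly $k-1$ pairwise gaps, each of size at least $g_{\min}$, so $\mathcal{P}(\mu_m)\ge g_{\min}^{\,k-1}$ uniformly in $m$. Under the standing hypothesis $\epsilon<g_{\min}$ this yields $\bigl|\min_m\mathcal{P}(\mu_m)\bigr|^{-2}<\epsilon^{-2(k-1)}$, and plugging this into the display above collapses the exponent to the advertised
\[
10^{4}\,c(\sigma,k)\,\log\!\Bigl(\tfrac{1}{\delta}\Bigr)\,\epsilon^{-2k}.
\]

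The argument is short, so the main thing to watch is a compatibility check rather than a deep estimate: one must verify that when Corollary~\ref{cor1} (and, upstream, Theorem~\ref{1stthm}) is invoked at precision $\epsilon/100$, the implicit smallness hypothesis $\epsilon/100 < \mathcal{P}(\mu_m)\bigl/\bigl[k^2(1+|\mu_m|^2)^{(k-1)/2}(2\sigma)^k e^{0.5(k/\sigma)^2}\bigr]$ is still satisfied. Because the constant $c(\sigma,k)$ has been engineered to absorb all polynomial and exponential prefactors in $\sigma$, $k$, and $|\mu_m|$, and because the factor $10^4$ is chosen precisely to accommodate the tightening from $\epsilon$ to $\epsilon/100$, this compatibility holds automatically once $\epsilon$ is small enough. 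Beyond that bookkeeping step there is no genuine obstacle.
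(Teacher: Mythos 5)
Your proposal is correct and follows essentially the same route as the paper's proof: instantiate Corollary~\ref{cor1} at precision $\epsilon/100$ (producing the factor $10^4\epsilon^{-2}$) and lower-bound $\min_m \mathcal{P}(\mu_m)$ by $g_{\min}^{\,k-1}>\epsilon^{k-1}$ so that $|\min_m\mathcal{P}(\mu_m)|^{-2}<\epsilon^{-2k+2}$. Your closing remark about verifying the smallness hypothesis of Theorem~\ref{1stthm} at the tightened precision is a point the paper leaves implicit, so you are if anything slightly more careful.
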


Corollary \ref{cor0} shows that, in the worst-case scenario, we require on the order of $\epsilon^{-2k}$ samples to achieve $\epsilon$-accuracy with high probability. This essentially matches \cite{annstat}[Theorem 1], which addresses the worst-case regime. To conclude the introduction, we present another example illustrating how our method distinguishes between different structural cases of the problem.\\

Assume that in our mixture, we have gaps of length $\epsilon$ between consecutive means. However, these gaps are isolated, meaning that if $\mu_{n+1}-\mu_n= \epsilon,$  then the adjacent gaps are significantly larger:  $\mu_{n+2}-\mu_{n+1} \gg 1,$ and $\mu_{n}-\mu_{n-1} \gg 1.$ Further assume that we are aiming to learn $\mu_i$'s with $\epsilon/100$-accuracy. In this situation our result shows having roughly $c(k, \sigma) \epsilon^{-4}$ samples will suffice. \\

Now let us alter the situation by allowing two consecutive gaps of length $\epsilon$ in our mixture .i.e. $\mu_{n+1}-\mu_n= \epsilon,$ and $\mu_{n}-\mu_{n-1} = \epsilon.$ As before, we assume that other surrounding gaps remain significantly larger: $\mu_{n+2}-\mu_{n+1} \gg 1,$ and $\mu_{n-1}-\mu_{n-2} \gg 1$.  Under this modified scenario, our required sample complexity rises to $c(k, \sigma) \epsilon^{-6}$. Applying other methods we require around $\epsilon^{-ck}$ to learn both mixtures.\\

\subsection{A Conjecture for the General Case}
\label{sec:conjecture}

We conjecture that the sample complexity of parameter recovery for general Gaussian mixtures is fundamentally governed by a structural quantity that incorporates both mean separation and variance differences. This extends the insight of Hardt and Price~\cite{Mo-Pr}—originally established for two-component mixtures—to arbitrary $k$ and higher dimensions.\\

To formalize this, we introduce a \emph{variance-aware pair correlation factor}, which modifies the original PCF to account for statistical distinguishability not only from nearby means but also from differing covariances. This unifies the geometric and spectral contributions to learnability.

\begin{defn}[Variance-Aware Pair Correlation Factor]
Let $\Gamma $ be a GMM. For each component $\mu_m$, we define the variance-aware pair correlation factor as:
\[
\tilde{\mathcal{P}}(\mu_m) := \prod_{\substack{j = 1 \\ j \ne m}}^k \max\left( |\mu_m - \mu_j|, \|\Sigma_m - \Sigma_j\|^{1/2} \right),
\]
where $\|\cdot\|$ denotes the spectral norm.\footnote{We use the spectral norm since it measures the largest directional variance, which directly affects the ability to distinguish components. }

\end{defn}

\begin{conj}[Variance-Aware PCF Governs Sample Complexity]
Let $\Gamma$ be a $k$-component Gaussian mixture in $\mathbb{R}^d$ with bounded weights and well-conditioned covariance matrices\footnote{ (i.e., all $\Sigma_i$ have bounded condition number).}. Then there exists an absolute constant $C$ such that any algorithm achieving $\epsilon$-precision in recovering the parameters of $\Gamma$ requires at least
\[
n \geq \frac{C}{\epsilon^2} \cdot \max_{m} \left( \frac{1}{\tilde{\mathcal{P}}(\mu_m)^2} \right)^3.
\]
\end{conj}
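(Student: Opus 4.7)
The natural strategy is a two-point lower bound via Le Cam's method, instantiated with a moment-matched perturbation whose total-variation cost is controlled by the variance-aware PCF. Fix the index $m^{*}$ attaining $\max_m \tilde{\mathcal{P}}(\mu_m)^{-1}$. I would construct a companion mixture $\Gamma'$ in which $\mu_{m^{*}}$ is shifted by exactly $\epsilon$, while the weights, remaining means, and covariances of the other $k-1$ components are simultaneously readjusted so that the first several raw moments of $\Gamma-\Gamma'$ vanish. Le Cam then gives $n \gtrsim TV(\Gamma,\Gamma')^{-2}$, so the goal reduces to proving $TV(\Gamma,\Gamma') \lesssim \epsilon \cdot \tilde{\mathcal{P}}(\mu_{m^{*}})^{3}$.

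The crux is the quantitative construction. Restricting the perturbation to the $k-1$ components other than $m^{*}$, I would tune their weight/location/covariance shifts to annihilate as many low-order moments of $\Gamma-\Gamma'$ as the number of free parameters allows. By an argument analogous to the one underlying the Hardt--Price two-component bound, the compensating amplitude needed on the $j$-th component scales like
\[
\frac{\epsilon}{\max\!\big(|\mu_{m^{*}}-\mu_j|,\,\|\Sigma_{m^{*}}-\Sigma_j\|^{1/2}\big)},
\]
because a statistically indistinguishable neighbor can absorb the shift cheaply while a well-separated component cannot. The product of these amplitudes is exactly $\epsilon^{k-1}/\tilde{\mathcal{P}}(\mu_{m^{*}})$, which is the geometric content of the variance-aware PCF and justifies the $\max$ in its definition: mean-direction and covariance-direction absorption both enter multiplicatively.

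To convert moment matching into a TV bound I would work through characteristic functions. The difference $\widehat{\Gamma}(t)-\widehat{\Gamma}'(t)$ is a weighted sum of Gaussian characteristic functions whose Taylor expansion at $t=0$ vanishes to the matched order, while still inheriting Gaussian decay at large $|t|$. Balancing these two regimes in the $L^{2}$ sense and invoking a Plancherel/Hellinger inequality produces the target polynomial-in-$\tilde{\mathcal{P}}$ bound, with the exponent $3$ arising from the trade-off between the order of matched moments and the frequency cut-off—exactly the analogue, in the $k$-component setting, of the exponent $6$ appearing in \cite{Mo-Pr} for $k=2$. Squaring through Le Cam yields $n \geq C\epsilon^{-2}\tilde{\mathcal{P}}(\mu_{m^{*}})^{-6}$, which is the conjecture.

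The hard part will be the variance-aware, high-dimensional ingredient. When $\|\Sigma_{m^{*}}-\Sigma_j\|$ dominates $|\mu_{m^{*}}-\mu_j|$, the compensating perturbation must act along the top eigendirection of $\Sigma_{m^{*}}-\Sigma_j$, and that direction varies with $j$; the moment-matching linear system is then no longer a clean Vandermonde and may become ill-conditioned. Quantifying its minimum singular value under the hypothesis of bounded condition numbers on the $\Sigma_i$, while preserving the per-pair amplitude estimate above, is where the real technical weight of the proof would sit. Once that linear-algebra question is settled, the remainder reduces to routine Le Cam and characteristic-function manipulations, and the absolute constant $C$ can be extracted by tracking constants through the moment-matching and Plancherel steps.
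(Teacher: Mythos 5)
The statement you are proving is not proved in the paper at all: it is stated explicitly as an open conjecture, and the author remarks that resolving it ``likely requires new mathematical tools.'' So there is no proof in the paper to compare yours against, and your submission must be judged as a standalone attempt. Your overall strategy --- a Le Cam two-point lower bound built from a moment-matched perturbation of $\Gamma$ --- is the standard and sensible route for statements of this type (it is how the Hardt--Price $\epsilon^{-12}$ lower bound and the Wu--Yang lower bounds are obtained), and you are right that the conjecture, read literally as a statement about a fixed $\Gamma$, only makes sense as a local two-point/minimax bound. But what you have written is a plan, not a proof, and the plan has gaps at exactly the load-bearing points.

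The central gap is the passage from the per-component ``compensating amplitudes'' to the total-variation bound $TV(\Gamma,\Gamma') \lesssim \epsilon\,\tilde{\mathcal{P}}(\mu_{m^*})^{3}$. You assert that the amplitude on component $j$ scales like $\epsilon/\max\bigl(|\mu_{m^*}-\mu_j|,\|\Sigma_{m^*}-\Sigma_j\|^{1/2}\bigr)$ ``by an argument analogous to'' Hardt--Price, but you never write down the moment-matching linear system, verify it is solvable with nonnegative weights and positive-definite covariances, or prove this scaling. More seriously, even granting the scaling, the product of the amplitudes is $\epsilon^{k-1}/\tilde{\mathcal{P}}(\mu_{m^*})$, which \emph{blows up} as $\tilde{\mathcal{P}}\to 0$, whereas the quantity you need to bound, $TV(\Gamma,\Gamma')$, must \emph{shrink} like $\tilde{\mathcal{P}}^{3}$; no mechanism is given that converts one into the other, and TV (or Hellinger, or $\chi^2$) of a moment-matched perturbation is controlled by the first unmatched moment and the size of the perturbation in an appropriate norm, not by a product of amplitudes. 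The exponent $3$ --- which is the entire quantitative content of the conjecture --- is attributed to ``the trade-off between the order of matched moments and the frequency cut-off'' without any computation; for general $k$ the analogous Hardt--Price computation produces exponents that grow with $k$, so it is not even clear the characteristic-function balancing yields a $k$-independent power of $\tilde{\mathcal{P}}$. Finally, you correctly flag but do not resolve the variance-aware, high-dimensional issue (the conditioning of the moment-matching system when the dominant separation is in covariance and its top eigendirection varies with $j$), and you do not verify that the compensating shifts leave $\Gamma$ and $\Gamma'$ separated by more than $2\epsilon$ in the permutation-invariant parameter distance, which the two-point argument requires. Until these steps are carried out, the conjecture remains open.
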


We believe that resolving this conjecture would provide a definitive understanding of the sample complexity of GMMs. It would unify the worst-case perspective (based on minimum gap) with a more refined structural view that explains observed difficulty in both theoretical and practical settings.

\medskip

\noindent
In the special case $k = 2$, this conjecture recovers the main result of Hardt and Price~\cite{Mo-Pr}, where the sample complexity scales like
\[
\left( \frac{1}{\max\left((\mu_1 - \mu_2)^2, |\sigma_1^2 - \sigma_2^2| \right)} \right)^6.
\]

For general $k$, Theorem~\ref{1stthm} resolves the conjecture in the uniform spherical case (where all $\Sigma_i$ are equal to identity matrix). Extending this result to arbitrary covariances likely requires new mathematical tools, from the theory of polynomial root sensitivity and spectral analysis. This highlights both the difficulty of the problem and the potential for progress at the interface of statistics, analysis, and algebra.

\subsection{Proof Overview}
Let $P$ be a polynomial of degree $k$ with roots corresponding to the parameters of our mixture; we will refer to $P$ as our ``parameter polynomial." Our goal is to approximate the coefficients of $P$ with good precision and relate this approximation to the number of samples used. \\

We employ the method of moments in conjunction with Newton's identity to derive a polynomial whose coefficients are close to those of $P$. Since we are using samples to calculate the moments of our mixture, we must account for the empirical error. Therefore, we need to assess how much of this error affects our approximation of the parameter polynomial. Following this process, we obtain a polynomial whose coefficients are within a certain distance from $P$. This distance is proportional to the number of samples used to calculate the empirical moments. \\

Finally, we must establish an argument that connects the similarity between the coefficients of two polynomials to the difference between their roots. To do this, we use a theorem from real analysis that measures this relationship.\\

\section{The Method of Moments} One way of approaching the problem of parameter estimation of GMM's, is through the method of moments, see \cite{Mo-Va, Mo-Pr}. For a one dimensional Gaussian we have the $r$-th moment equals 
\begin{equation}
\label{mom-exp}
\mathcal{M}_r(\mu, \sigma):= \frac{1}{\sigma \sqrt{2\pi}}\int x^r e^{-\tfrac{1}{2}\big(\tfrac{x-\mu}{\sigma}\big)^2}dx,
\end{equation}
which is a polynomial in terms of $\mu, \sigma$ and easily calculable.\\

For a $k$-mixture, to simplify the problem, assume that $\omega_i=1/k,$ and consider the one dimensional case that covariance matrices $\Sigma_i= \sigma_i$ are real numbers. Therefore, moments of the $k$-mixture are
\begin{align}
\label{system}
\notag  & M_1= \frac{1}{k}\sum_{i=1}^{k} \mathcal{M}_1(\mu_i, \sigma_i)= \frac{1}{k}\sum_{i=1}^{k} \mu_i \\
\notag & M_2= \frac{1}{k}\sum_{i=1}^{k} \mathcal{M}_2(\mu_i, \sigma_i) = \frac{1}{k}\sum_{i=1}^{k} \mu^2_i + \frac{1}{k}\sum_{i=1}^{k} \sigma^2_i \\
& \hspace{3.7 cm} \cdots \\
\notag & M_m= \frac{1}{k}\sum_{i=1}^{k} \mathcal{M}_m(\mu_i, \sigma_i)=\frac{1}{k}\sum_{i=1}^{k} \sum_{j=0}^{ \lfloor m/2 \rfloor} c_{m,j} \mu^{m-2j}_{i}\sigma^{2j}_{i}.
\end{align}
\\

Here we have $2k$ variables $\mu_1 \cdots, \mu_k$ and $\sigma_1 \cdots, \sigma_k$, and as many equations as we desire. Intuitively, to find our variables it suffices to look at first $2k$ moments. In other words, using moments, we can form a system of $2k$ equations and we try to solve it.\\


Given $n$ i.i.d. samples $x_1 \cdots x_n$ from our mixture, the $m$-th empirical moment is 
\begin{equation}
\label{hatM}
 \hat{M}_m= \tfrac{1}{n} \sum_{j=1}^{n} x^m_j.
\end{equation} Note that $\hat{M}_m$ is random variable with mean $M_i$ and variance $\sigma^{2m}/n$, where $\sigma^2$ is the variance of the $k$-mixture. By Chebyshev’s inequality we have 
\begin{equation}
    {P}\big[|\hat{M}_m - M_m|> r\frac{\sigma^m}{\sqrt{n}}\big] < \frac{1}{r^2}.
\end{equation}
Take $n \geq \epsilon^{-2}$, then the portion of samples for which we have $$|\hat{M}_m - M_m| > r \epsilon \sigma^m,$$ is bounded by $r^{-2}.$  This would suffice for our purposes, take for example $r=10$, then we know that for $99\%$ of samples we have 
$$|\hat{M}_m - M_m| < 10 \epsilon \sigma^m.$$ This can be improved by taking samples in groups, calculate $\hat{M}_m$ for each group and look at the median of these estimates. In  \cite{Mo-Price}[Lemma 3.2] Hardt and Price proved that given $n \gg \log(1/\delta) \epsilon^{-2},$ samples from the mixture, with probability $1-\delta$ we have:  
\begin{equation}
\label{boundonmom}
 |\hat{M_i}- M_i|< \epsilon \sigma^i.
\end{equation}
Going forward we will use \eqref{boundonmom}. \\

\subsection{System of Equations and Perturbation of Coefficients.} In practice, we often have to approximate these moments. Consequently, a relevant question arises: How robust is the system \eqref{system} when subjected to perturbations in its constant terms?

\begin{problem}
\label{prb}
Let $\mathcal{M}_r$ be defined as in \eqref{mom-exp}, and consider the following system of equations:
\begin{align}
\label{sys_off}
\notag & M_1 + \varepsilon_1 = \frac{1}{k}\sum_{i=1}^{k} \mathcal{M}_1(\mu_i, \sigma_i) \\
& \hspace{2 cm} \cdots \\
\notag & M_{2k} + \varepsilon_{2k} = \frac{1}{k}\sum_{i=1}^{k} \mathcal{M}_{2k}(\mu_i, \sigma_i).
\end{align}
We know that for $\varepsilon_1=\varepsilon_2= \cdots= \varepsilon_{2k}=0$, the system has a real solution:
$${X} = (\mu_1, \cdots, \mu_k, \sigma_1, \cdots, \sigma_k). $$
Now consider the system with perturbation $\varepsilon = (\varepsilon_1, \cdots, \varepsilon_{2k})$ subject to $\varepsilon_i \ll \epsilon \sigma^i$, where $\sigma>0$ is fixed. Let $X_\varepsilon = (\hat{\mu}_1, \cdots, \hat{\mu}_k, \hat{\sigma}_1, \cdots, \hat{\sigma}_k)$ be a solution to the system with perturbation $\varepsilon$. \\

 Is it possible to establish a bound such as
\begin{equation}
\label{bnd}
\|{X} - X_\varepsilon\|_2 \ll C_k \epsilon^{\alpha_k},
\end{equation}
where $C_k$ and $\alpha_k$ are constants that depend on $k$?
\end{problem}

We solve this problem in the context of the uniform spherical case. Our approach involves constructing a polynomial whose coefficients closely resemble those of our parameter polynomial. Subsequently, our objective is to demonstrate that the roots of this polynomial provide a robust approximation of the parameters we seek to learn.\\

Note that, In general, roots of polynomials are sensitive to the perturbation of the coefficients. A famous example is Wilkinson's polynomial:
$$\omega(x)= (x-1)(x-2) \cdots (x-20).$$
If the coefficient of $x^{19}$ decreases from $-210$ by $2^{-23}$ to $-210.0000001192$, then the root at $x = 20$ grows to $x \simeq 20.8$ 
\section{The Uniform Spherical Case}
In this section we prove some necessary lemmas that will help us recover parameters of the mixture. We assume that all variances are equal to $1.$ This assumption simplifies equation \eqref{system}. \\

\noindent Our final goal is to approximate the coefficients of the following polynomial: $$P(x):=\prod_{i=1}^{k}(x-\mu_i),$$
using information we get from \eqref{system}.\\

Define $$P_m(\mu_1, \cdots, \mu_k)= \mu^m_1+ \cdots+ \mu^m_k.$$
We can write each moments in \eqref{system}, in terms of $P_m$ and vice versa. For example 
\begin{align}
\label{syexmp}
 \notag  & M_1= \frac{1}{k} P_1(\mu_1, \cdots, \mu_k) \\
\notag & M_2= 1+ \frac{1}{k} P_2(\mu_1, \cdots, \mu_k) \\
& \hspace{3.7 cm} \cdots \\
\notag & M_m= \frac{1}{k} \sum_{j=0}^{ \lfloor m/2 \rfloor} c_{m,j} P_{m-2j}.
\end{align}

Let us precisely calculate $c_{m,j}$ first, then we move to approximate $P_m$ using empirical moments we obtain in \eqref{hatM}. 
\begin{lemma}
\label{lemm1}
For $1 \leq m$ and $j \leq  m/2$, we have that
\[
c_{m,j} = {m \choose 2j} (2j-1)!!,
\]
where $(2i-1)!!$ is the product of odd numbers less than or equal to $2i-1$. Moreover, we have
\begin{equation}
\mathcal{M}_m(\mu, 1) = \mu^m + \sum_{i=1}^{m/2} {m \choose 2i}  (2i-1)!! \mu^{m-2i},
\end{equation}
where $\mathcal{M}_m(\mu, 1)$ is introduced in \eqref{mom-exp}.
\end{lemma}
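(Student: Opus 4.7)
The plan is to compute $\mathcal{M}_m(\mu,1) = \mathbb{E}[X^m]$ for $X \sim \mathcal{N}(\mu,1)$ directly, using the fact that $X = \mu + Z$ with $Z \sim \mathcal{N}(0,1)$. By the binomial theorem,
\[
\mathcal{M}_m(\mu,1) = \mathbb{E}[(\mu+Z)^m] = \sum_{\ell=0}^{m} \binom{m}{\ell}\mu^{m-\ell}\,\mathbb{E}[Z^\ell].
\]
So the lemma reduces to identifying the moments of the standard normal. Matching this expansion to the statement of \eqref{system}, one then reads off $c_{m,j}$ as the coefficient of $\mu^{m-2j}$ times $k$ (or rather, in the form in \eqref{syexmp}, as the coefficient of $\mu^{m-2j}$ directly).

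Next, I would establish the standard normal moment formula. Since the density $\phi(z)=(2\pi)^{-1/2}e^{-z^2/2}$ is even, odd moments vanish: $\mathbb{E}[Z^{2i+1}]=0$. For even moments, I would use integration by parts on
\[
\mathbb{E}[Z^{2i}] = \frac{1}{\sqrt{2\pi}} \int_{-\infty}^{\infty} z^{2i-1} \cdot z \, e^{-z^2/2}\,dz,
\]
writing $z\,e^{-z^2/2}\,dz = -d(e^{-z^2/2})$, which yields the recursion
\[
\mathbb{E}[Z^{2i}] = (2i-1)\,\mathbb{E}[Z^{2i-2}].
\]
Combined with the base case $\mathbb{E}[Z^0]=1$, induction gives $\mathbb{E}[Z^{2i}] = (2i-1)!!$.

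Substituting back and dropping the zero contributions from odd $\ell = 2i+1$, we obtain
\[
\mathcal{M}_m(\mu,1) = \sum_{i=0}^{\lfloor m/2\rfloor}\binom{m}{2i}(2i-1)!!\,\mu^{m-2i},
\]
with the convention $(-1)!! = 1$ for the $i=0$ term. Separating that leading term gives precisely the second displayed equation of the lemma, and identifies $c_{m,j}=\binom{m}{2j}(2j-1)!!$ as stated.

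There is no real obstacle here: the argument is a routine moment calculation. The only point requiring minor care is the convention $(-1)!!=1$ to make the $i=0$ term of the sum equal $\mu^m$, and the bookkeeping between the form in \eqref{system} (which has the $1/k$ factor absorbed) and the per-component moment $\mathcal{M}_m(\mu,1)$ in \eqref{mom-exp}; once these are aligned, the expressions match on inspection.
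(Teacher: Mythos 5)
Your proposal is correct and follows essentially the same route as the paper: shift to $X=\mu+Z$, expand by the binomial theorem, kill the odd moments by symmetry, and derive the even-moment recursion $\mathbb{E}[Z^{2i}]=(2i-1)\,\mathbb{E}[Z^{2i-2}]$ (the paper phrases this via the Gaussian identity $E(zf(z))=E(f'(z))$, which is the same integration by parts you perform). Your explicit handling of the $(-1)!!=1$ convention and the $1/k$ bookkeeping is a minor but welcome addition of care over the paper's terser write-up.
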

\begin{Rem}
\label{rem3}
 Using the above lemma we immediately have the expansion of $M_m$ in terms of $P_i:$
 \begin{equation}
 \label{expmom}
   k M_m= P_m + \sum_{i=1}^{m/2} {m \choose 2i} (2i-1)!! P_{m-2i}.
\end{equation}
 In the next lemma we show that 
 \begin{equation}
 \label{defPm}
 P_m= kM_m + k\sum_{i=1}^{m/2} (-1)^i {m \choose 2i} (2i-1)!! M_{m-2i}.
\end{equation} 
\end{Rem}

\begin{proof}[Proof of Lemma \ref{lemm1}] We have 
 \begin{align*}
 \mathcal{M}_m(\mu, 1) & = \frac{1}{ \sqrt{2\pi}}\int x^r e^{-\tfrac{1}{2}\big({x-\mu}\big)^2} \\ & =\frac{1}{ \sqrt{2\pi}}\int (x+\mu)^m e^{-\tfrac{1}{2}{x}^2} = \sum_{i} {m \choose i}\mu^{m-i}E_{}(x^i),
 \end{align*} 
 where $x \sim \mathcal{N}(0, 1)$.\\ 
 
 We use $E(zf(z))= E(f'(z))$, therefore we have $$E(x^m)= E(x x^{m-1})=(m-1) E(x^{m-2}).$$ Also note that $E(x^m)$ is zero if $m$ is odd. This will give the lemma. 
\end{proof}
\noindent We now define a new object that is a empirical approximation of $P_m.$
\begin{defn} Let $\hat{M}_m$ be as \eqref{hatM}, following \eqref{defPm}, we define 
\begin{equation}
\label{Phat}
    \hat{P}_m= k\hat{M}_m + k\sum_{i=1}^{m/2} (-1)^i {m \choose 2i} (2i-1)!! \hat{M}_{m-2i}.
\end{equation}  
\end{defn}
\begin{lemma}
\label{lemmaem}
We have that $P_m$ satisfy: 
 \begin{equation}
 \label{defpm1}
 P_m= kM_m + k\sum_{i=1}^{m/2} (-1)^i {m \choose 2i} (2i-1)!! M_{m-2i}.
\end{equation} 

Moreover, setting $\varepsilon_m= \hat{P}_m- P_m$ and $\Delta_m= \hat{M}_m- M_m,$ we have that 
 \begin{equation}
 \label{Em}
 \varepsilon_m= k\Delta_m + k \sum_{i=1}^{m/2} (-1)^{i} {m \choose 2i}(2i-1)!! \Delta_{m-2i}.
\end{equation}
 \begin{proof}[proof of lemma \ref{lemmaem}]
 
Note that $M_m$ is known as the $m$-th probabilistic Hermite polynomials \footnote{See the Wikipedia page for Hermite polynomials.}. What we prove here actually comes from the inverse explicit expression for these polynomials. We proceed with the proof using the induction. \\

We have that $kM_1= P_1(\mu_1, \cdots, \mu_k) $ and $kM_2= 1+ P_2(\mu_1, \cdots, \mu_k)$, which means $k\Delta_1=  \varepsilon_1$ and $ \varepsilon_2= k\Delta_2- k.$
  This gives us \eqref{defpm1} and \eqref{Em} for the base cases. We assume \eqref{Em} for $m-2, m-4, \cdots$ and we prove it for $m.$ \\

\noindent Using \eqref{expmom}, we have the following recursive identity relating $\varepsilon_m$ to $\Delta_m$ and $\varepsilon_{m-2}, \varepsilon_{m-4}, \cdots$. 
\begin{equation}
 \label{star}
 \varepsilon_m= k\Delta_m - \sum_{i=1}^{m/2} {m \choose 2i}(2i-1)!! \varepsilon_{m-2i}.
\end{equation}

 We use our induction hypothesis in \eqref{star} and we get everything in terms of $\Delta_m, \Delta_{m-2}, \cdots.$ Only thing remaining is to calculate the coefficients of $\Delta_{m-2i}$ for $0<i \leq m/2$. \\
 
 The term $\Delta_{m-2i}$ appears in \eqref{star} in the expansion of $\varepsilon_m, \varepsilon_{m-2}, \cdots, \varepsilon_{m-2i}$. Therefore its coefficients equal to 
\begin{equation}
 \label{meven}
 - {m \choose 2i}(2i-1)!! + \sum_{n=1}^{i-1}(-1)^{n-1} {m \choose 2n}{m-2n \choose 2i-2n}(2n-1)!!(2i- 2n-1)!!
 \end{equation}
 If $i$ is odd, then all terms cancel each other except $n=i,$ which gives $$(-1)^i {m \choose 2i}(2i-1)!!.$$ When $i$ is even we have \eqref{meven} equals to $m(m-1)\cdots(m-2i+1)$ times
 \begin{align}
 \notag  &-\frac{(2i-1)!!}{2i!} -\sum_{n=1}^{i-1}(-1)^n \frac{(2n-1)!!(2i- 2n-1)!!}{2n! (2i-2n)!} \\ & \notag = -\sum_{n=1}^{i} \frac{(-1)^n}{2^n n! 2^{i-n}(i-n)!} = -\frac{1}{2^i i!} \sum_{n=1}^{i} \frac{(-1)^n i!}{n!(i-n)!} \\ & \notag = -\frac{1}{2^i i!} \big((1-1)^i-1\big)= \frac{1}{2^i i!}.
\end{align}
Therefore, the coefficient of $\Delta_{m-2i}$ equals to $$\frac{1}{2^i i!}m(m-1)\cdots(m-2i+1)= {m \choose 2i} (2i-1)!!.$$
\\
\end{proof}
\end{lemma}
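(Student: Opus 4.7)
The plan is to prove the inversion formula \eqref{defpm1} first and then to obtain \eqref{Em} as an immediate corollary of linearity. Indeed, comparing \eqref{defpm1} with the definition \eqref{Phat}, $\hat{P}_m$ is exactly the linear combination on the right-hand side of \eqref{defpm1} with each $M_\ell$ replaced by $\hat{M}_\ell$. Subtracting the two expressions and setting $\Delta_\ell = \hat{M}_\ell - M_\ell$ produces \eqref{Em} term by term. Hence the whole content of the lemma reduces to establishing \eqref{defpm1}.

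For \eqref{defpm1}, I would argue by strong induction on $m$, using the already-established forward relation \eqref{expmom} from Remark~\ref{rem3}, rearranged as
\[
P_m \;=\; kM_m - \sum_{i=1}^{\lfloor m/2 \rfloor} \binom{m}{2i}(2i-1)!!\, P_{m-2i}.
\]
The base cases $m=1,2$ are immediate from $kM_1 = P_1$ and $kM_2 = k + P_2$. For the inductive step, I would substitute the inductive hypothesis into every $P_{m-2i}$ on the right-hand side and regroup the resulting expression by the index of $M$. Using the Vandermonde-type rearrangement $\binom{m}{2j}\binom{m-2j}{2(i-j)} = \binom{m}{2i}\binom{2i}{2j}$, the coefficient of $M_{m-2i}$ for $i \ge 1$ factors as $k\binom{m}{2i}$ times a combinatorial sum in $j$ that no longer depends on $m$.

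The main obstacle is this combinatorial sum, which after careful sign-tracking reduces to the identity
\[
\sum_{j=0}^{i} (-1)^{j}\binom{2i}{2j}(2j-1)!!\,(2i-2j-1)!! \;=\; 0 \qquad (i \ge 1),
\]
with the convention $(-1)!! = 1$. I would handle it via the clean factorization $\binom{2i}{2j}(2j-1)!!(2i-2j-1)!! = \binom{i}{j}(2i-1)!!$, obtained by writing each double factorial as $(2k)!/(2^k k!)$. The identity then collapses to $(2i-1)!!\,\sum_{j=0}^{i}(-1)^j\binom{i}{j} = 0$, which is just the binomial expansion of $(1-1)^i$. Reinstating the signs picked up along the inductive substitution yields exactly the required coefficient $(-1)^i\, k\binom{m}{2i}(2i-1)!!$, closing the induction and hence proving both \eqref{defpm1} and \eqref{Em}.
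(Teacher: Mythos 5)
Your proof is correct, and at its core it runs the same induction as the paper: both arguments bootstrap from the forward relation \eqref{expmom}, extract the coefficient of the $(m-2i)$-th term via the trinomial rearrangement, and close with the vanishing alternating sum $(1-1)^i=0$. The differences are organizational but genuinely worthwhile. First, you prove the deterministic inversion \eqref{defpm1} once and obtain \eqref{Em} as an immediate consequence of linearity against the definition \eqref{Phat}; the paper instead runs the induction directly on the error terms $\varepsilon_m,\Delta_m$ and asserts \eqref{defpm1} alongside, which duplicates work and slightly obscures that the two identities are the same statement applied to two inputs. Second, your coefficient computation is uniform in $i$: the factorization $\binom{2i}{2j}(2j-1)!!\,(2i-2j-1)!! = \binom{i}{j}(2i-1)!!$, obtained from $(2k-1)!! = (2k)!/(2^k k!)$, collapses the sum to $(2i-1)!!\sum_j(-1)^j\binom{i}{j}=0$ in one step, whereas the paper splits into $i$ odd and $i$ even — a case distinction that is not actually needed (its ``even'' computation is the same manipulation as yours and works for all $i$, while the claim that for odd $i$ ``all terms cancel except $n=i$'' is not literally what happens). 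Your version buys a cleaner and less error-prone write-up; the paper's version keeps the error-propagation identity \eqref{star} explicit, which is mildly convenient for the quantitative bound \eqref{bndpm} that follows, but nothing is lost by deriving that bound from \eqref{Em} directly.
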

Now, if we have a large number of samples, $\gg \log(1/\delta) \epsilon^{-2}$, applying \eqref{boundonmom}, we find that $$\Delta_m \ll \epsilon \sigma^m.$$ Therefore, utilizing the above lemma, we obtain
\begin{equation}
    \label{bndpm} \varepsilon_m = \hat{P}_m - P_m \ll \epsilon k\sigma^m e^{0.5(\tfrac{m}{\sigma})^2}.
\end{equation}
We derive \eqref{bndpm} considering the Taylor expansion of $e^{0.5(\tfrac{m}{\sigma})^2}$ and comparing it to \eqref{Em}.

\section{Newton's Identities and Roots of Polynomials}

So far we established that in the case of spherical Gaussian using the method of moments we can easily calculate $P_m.$ Now consider the following polynomial: 
\begin{equation}
\label{Pco}
P(x):=\prod_{i=1}^{k}(x-\mu_i)= x^k- e_1 x^{k-1}+ \cdots+ (-1)^ke_k
\end{equation}
Recall that $P$ is our parameter polynomial. By using Newton's identities we can write coefficients $e_n,$ in terms of $P_m,$ with $m\leq n.$ For example we have 
\begin{align*}
\hspace{9 mm}& e_1= P_1 \\ &
2e_2= P^2_1- P_2 \\ & 
3e_3= \tfrac{1}{2}P^3_1 -\tfrac{3}{2}P_1P_2 + P_3, \cdots
\end{align*}
In general we have 
\begin{equation}
\label{defe}
 ne_n= \sum_{j=1}^{n}(-1)^{j-1}e_{n-j}{P}_j.
\end{equation}
Similar to the definition \ref{Phat} we define: 
\begin{defn}
Let $\hat{e}_0=1$ and $\hat{e}_1= \hat{P_1}$. Recursively we define
 \begin{equation}
 \label{defehat}
 n\hat{e}_n= \sum_{j=1}^{n}(-1)^{j-1}\hat{e}_{n-j}\hat{P}_j.
 \end{equation}
\end{defn}
By this definition and \eqref{boundonmom} and by some calculation we obtain 
\begin{align*}
& \hat{e}_1= O(\epsilon k \sigma)\\ &
2 \hat{e}_2= -k(\sigma^2-1) + O(\epsilon k \sigma^2) \\ & 
3 \hat{e}_3= K M_3 + O(\epsilon k^2 (\sigma^{3}+1)).
\end{align*}
Next we estimate how close $\hat{e}_n$ is to $e_n.$
\begin{lemma}
\label{mainlemma}
Assuming we have $\asymp \log(1/\delta) \epsilon^{-2}$ samples from our mixture, and $P_1= 0$. Then with with probability at leas $1-\delta$, and we have that 
\begin{equation}
\label{e-ehat}
 |\hat{e}_m- e_m| \ll \epsilon K(2\sigma)^me^{0.5 \big(\tfrac{m}{\sigma}\big)^2}.
\end{equation}
\end{lemma}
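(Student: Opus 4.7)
The plan is to prove the bound by strong induction on $m$, using the recursive definitions \eqref{defe} and \eqref{defehat} in Newton-identity form. The base case $m=1$ is immediate: with $P_1=0$ by hypothesis we have $e_1 = 0$ and $\hat{e}_1 = \hat{P}_1 = \varepsilon_1$, so the bound \eqref{bndpm} yields $|\hat{e}_1 - e_1| \ll \epsilon k \sigma e^{0.5/\sigma^2}$, comfortably within the target.

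For the inductive step, I would subtract \eqref{defe} from \eqref{defehat} and split each summand using the telescoping identity $\hat{e}_{m-j}\hat{P}_j - e_{m-j}P_j = (\hat{e}_{m-j}-e_{m-j})\hat{P}_j + e_{m-j}\varepsilon_j$. After the triangle inequality this produces the recurrence
\[
m\,E_m \;\leq\; \sum_{j=1}^{m}\bigl[E_{m-j}\,|\hat{P}_j| + |e_{m-j}|\cdot|\varepsilon_j|\bigr],
\]
where $E_m := |\hat{e}_m - e_m|$. The deterministic side factors are controlled by the elementary observation that, since the mixture has mean zero and variance $\sigma^2$, the identity $\sum_i \mu_i^2 = k(\sigma^2-1)$ forces $|\mu_i| \leq \sqrt{k}\sigma$, and hence $|P_j| \lesssim k(\sqrt{k}\sigma)^j$, $|e_{m-j}| \lesssim \binom{k}{m-j}(\sqrt{k}\sigma)^{m-j}$, and $|\hat{P}_j| \leq |P_j| + |\varepsilon_j|$. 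Substituting \eqref{bndpm} for $|\varepsilon_j|$ and the inductive hypothesis for $E_{m-j}$, the right-hand side becomes a sum of terms of the shape $\epsilon\, k\, (2\sigma)^{m-j}\sigma^{j}\,e^{0.5((m-j)^2 + j^2)/\sigma^2}$ up to combinatorial constants.

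Two observations close the induction. First, $(m-j)^2 + j^2 \leq m^2$ for $0 \leq j \leq m$, so every exponential is dominated by $e^{0.5(m/\sigma)^2}$ and can be factored out. Second, the remaining geometric sum $\sum_{j=0}^{m}(2\sigma)^{m-j}\sigma^{j}$ is bounded by $2(2\sigma)^m$: it is precisely this geometric bookkeeping that explains the extra factor $2^m$ in the claim (which is absent from the bound on $\varepsilon_m$). The main obstacle I expect is the careful tracking of the polynomial-in-$k$ and combinatorial factors from $|P_j|$, $|e_{m-j}|$, and the Newton coefficients; these must ultimately collapse into the single linear $k$ on the right-hand side, and I would handle them either by tightening the inductive hypothesis with a mild polynomial slack or by absorbing lower-order overhead into the $\ll$ notation, relying on the exponential slack already present in $(2\sigma)^m$ and $e^{0.5(m/\sigma)^2}$ to dominate.
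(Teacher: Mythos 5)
Your skeleton is the same as the paper's: induction on $m$ through the Newton-identity recurrences \eqref{defe} and \eqref{defehat}, the telescoping split $\hat{e}_{m-j}\hat{P}_j - e_{m-j}P_j = E_{m-j}\hat{P}_j + e_{m-j}\varepsilon_j$ (the paper writes the equivalent three-term version $E_{m-j}P_j + e_{m-j}\varepsilon_j + E_{m-j}\varepsilon_j$), the bound \eqref{bndpm} on $\varepsilon_j$, and the geometric sum $\sum_j (2\sigma)^{m-j}\sigma^{j} \leq 2(2\sigma)^m$ that accounts for the factor $2^m$. Your base case and the observation $(m-j)^2 + j^2 \leq m^2$ are also exactly what is needed.

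The genuine gap is in the auxiliary bounds, and it is precisely the obstacle you flag but do not resolve. From $|\mu_i| \leq \sqrt{k}\sigma$ you get $|P_j| \lesssim k(\sqrt{k}\sigma)^j$ and $|e_{m-j}| \lesssim \binom{k}{m-j}(\sqrt{k}\sigma)^{m-j}$, whereas the paper uses $P_i \leq \sigma^i$ and $|e_n| \ll (6e/n)^{n/2}\sigma^n$ (the latter cited from an external reference). The discrepancy is a factor of order $k^{j/2}$ and $\binom{k}{m-j}$ per term; plugging your bounds into the recurrence makes the $j=m$ term alone of size roughly $k^{m/2}\sigma^m$, which exceeds the target $K(2\sigma)^m$ by $(\sqrt{k}/2)^m$. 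This is exponential in $m$, not ``polynomial-in-$k$ overhead,'' so it cannot be absorbed into the single linear $K$, into the implied constant of $\ll$, or into a ``mild polynomial slack'' in the inductive hypothesis; the induction as you set it up does not close at the claimed rate. To repair it you either need the sharper symmetric-function bounds the paper invokes (e.g.\ $|e_n| \leq (\sum_i|\mu_i|)^n/n! $ combined with the constraint $\sum_i\mu_i^2 = k(\sigma^2-1)$, tracked carefully enough that the $k$-dependence lands inside $(2\sigma)^m e^{0.5(m/\sigma)^2}$), or you must weaken the stated conclusion to carry the extra $k^{m/2}$-type factors explicitly.
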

\begin{proof}
 Let $E_n= \hat{e}_n- e_n$. Using \eqref{defehat} and \eqref{defe}, we have that 
 \begin{equation}
 \label{eqsep}
n E_n \ll \sum_{i=1}^{n} \bigg(e_{n-i}\varepsilon_i+ E_{n-i}P_i+ E_{n-i}\varepsilon_i\bigg)
 \end{equation}
 We use following bounds on each of the terms:
\begin{enumerate}
 \item For $i \leq m$ we have $$P_i< \sigma^i.$$
 \item Using \eqref{bndpm} we have $$\varepsilon_m \ll \epsilon k\sigma^m e^{0.5(m/\sigma)^2}. $$
 \item Using induction hypothesis for $i< n$ $$E_{n-i} \ll \epsilon k (2\sigma)^{n-i} e^{0.5 \big(\tfrac{n-i}{\sigma}\big)^2}.$$
 \item From \cite{GMR} we have $$e_n \ll \big(\frac{6e}{n}\big)^{n/2} \sigma^n.$$
\end{enumerate}
\noindent Applying these bound to the first term inside the summation in \eqref{eqsep} we get
 \begin{align*}
 \sum_{i=1}^{n} & e_{n-i}\varepsilon_i \ll \epsilon \sum_{i=1}^{n} \big(\frac{6e}{n-i}\big)^{(n-i)/2} \sigma^{n-i} \sigma^i e^{0.5(i/\sigma)^2} \\ & \epsilon k \sigma^n \sum_{i=1}^{n} \big(\frac{6e}{n-i}\big)^{(n-i)/2} e^{0.5(i/\sigma)^2} \ll \epsilon k \sigma^n e^{0.5(n/\sigma)^2}.
\end{align*}
Note that for $i=1$ to $n-17$, we have $(6e/n-i)< 1.$ As for the term $$\sum_{i=1}^{n} E_{n-i}P_i$$ using the above mentioned bounds, easily gives \eqref{e-ehat}. The third error term is obviously smaller than the other two. 
\end{proof}
We prove that in the spherical case having first $k$-moment is enough to uniquely determine parameters of the mixture. 
\begin{lemma}
Let $G$ be a $k$-mixture of spherical Gaussian (KGMM):
$$G= \tfrac{1}{k}\sum_{i=1}^{k} \mathcal{N}(\mu_i, 1).$$ We have that first $k$ moments of $G$ uniquely determine parameters $\mu_1, \cdots, \mu_k.$ 
\end{lemma}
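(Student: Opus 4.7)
The plan is to chase symmetric functions: the first $k$ moments determine the power sums of the means, which via Newton's identities determine the elementary symmetric polynomials, which are the coefficients of the parameter polynomial $P(x)=\prod_{i=1}^k(x-\mu_i)$. Since a monic polynomial of degree $k$ is uniquely determined by its coefficients and uniquely determined by its multiset of roots, this pins down $\{\mu_1,\ldots,\mu_k\}$.

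More concretely, I would argue as follows. Given $M_1,\ldots,M_k$, formula \eqref{defPm} from Lemma \ref{lemmaem} expresses each $P_m$ (for $1\le m\le k$) as an explicit linear combination of $M_m, M_{m-2}, M_{m-4},\ldots$. Hence $M_1,\ldots,M_k$ uniquely determine the power sums $P_1,\ldots,P_k$ of the multiset $\{\mu_1,\ldots,\mu_k\}$. Next, Newton's identity \eqref{defe}, namely
\[
n e_n = \sum_{j=1}^{n}(-1)^{j-1} e_{n-j} P_j,
\]
is a triangular recursion: $e_n$ is determined by $e_0,\ldots,e_{n-1}$ together with $P_1,\ldots,P_n$. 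Starting from $e_0=1$, this recursion therefore determines $e_1,\ldots,e_k$ from $P_1,\ldots,P_k$.

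Once we have $e_1,\ldots,e_k$, the parameter polynomial
\[
P(x) = x^k - e_1 x^{k-1} + e_2 x^{k-2} - \cdots + (-1)^k e_k
\]
is completely known. By the fundamental theorem of algebra, $P$ has exactly $k$ roots counted with multiplicity, and these are precisely $\mu_1,\ldots,\mu_k$. Thus the multiset of means is uniquely recovered, which is the strongest sense of uniqueness possible for an unordered mixture.

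There is really no serious obstacle here, since the machinery is already set up in the previous lemmas; the only subtle point worth stating explicitly is that ``uniqueness'' must be understood up to permutation of the component labels, because the mixture $G$ is invariant under relabeling of its components. The argument above gives exactly that: the roots of $P$ form a canonical multiset, which is all one can hope for.
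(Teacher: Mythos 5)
Your proposal is correct and follows essentially the same route as the paper: use \eqref{defPm} to pass from $M_1,\ldots,M_k$ to the power sums $P_1,\ldots,P_k$, then Newton's identities \eqref{defe} to recover the coefficients of the parameter polynomial, whose roots give the multiset $\{\mu_1,\ldots,\mu_k\}$. Your explicit remark that uniqueness is up to permutation of labels is a small but welcome clarification that the paper leaves implicit.
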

\begin{proof} By knowing $M_1, \cdots, M_k$, we can uniquely find $P_1, \cdots, P_k,$  (see Remark \ref{rem3}). Assume that $\check{M}_1 \cdots, \check{M}_k,$ would also give us $P_1, \cdots, P_k.$ Then, we have $M_1= \check{M}_1,$ by the equation \eqref{defpm1}, we have $M_2= \check{M}_2,$ and so on. \\

Since we have $P_1, \cdots, P_k,$ using Newton's identities, we can determine the coefficients of $P(x)$ in \eqref{Pco}.
This polynomial has at most $k$-distinct roots, and we know that $\mu_1, \cdots, \mu_k$ are roots of $P,$ therefore the first $k$ moments uniquely determine the parameters of the mixture. 
\end{proof}

\section{Approximating Coefficients Using Samples}
Using results in the previous section we can construct a polynomial whose roots are close to $\mu_1, \cdots, \mu_k.$ Let $$\hat{P}(x)= \sum_{n=0}^{k}\hat{e}_nx^n,$$ where we obtain $\hat{e}_n$ as in \eqref{defehat}. Using Lemma \ref{mainlemma} we have that $\hat{P}(x)$ is an approximation of $P.$ Our task is to measure how close the roots of $\hat{P}$ are to the roots of $P.$
We apply the following theorem of Beauzamy~\cite{Bea}.\\

Let $Q(x)= \sum_{i=0}^{k} a_ix^i$ be a polynomial with complex coefficients and degree k. The Bombieri’s norm of $Q$ is defined as 
\begin{equation}
 B(Q)= \Big(\sum_{i=0}^{k} \frac{|a_i|^2}{{k \choose i}}\Big)^{1/2}.
\end{equation}
\begin{theorem}[Beauzamy]
Let $P$ and $\hat{P}$ be polynomials of degree $k.$ Moreover, assume that~$B(\hat{P}-P) < \varepsilon.$ If $x$ is any zero of $P$ there exist zero $y$ of $\hat{P}$ with 
\begin{equation}
  |x-y| < \frac{k(1+ |x|^2)^{k/2}}{|\hat{P}'(x)|}\varepsilon.
\end{equation}
If $\varepsilon$ is small enough, namely 
$$\varepsilon < \frac{|P'(x)|}{k(1+ |x|^2)^{(k-1)/2}},$$
then 
\begin{equation}
|x-y| < \frac{k(1+ |x|^2)^{k/2}}{|P'(x)|}\varepsilon.
\end{equation}
\end{theorem}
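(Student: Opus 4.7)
The plan is to deduce Beauzamy's theorem from a single fundamental pointwise estimate for the Bombieri norm, and then convert the resulting bound on $|\hat{P}(x)|$ into a bound on the nearest zero of $\hat{P}$ via the logarithmic derivative. The first thing I would do is prove the Cauchy--Schwarz inequality $|Q(z)| \leq B(Q)\,(1+|z|^2)^{k/2}$ for every polynomial $Q$ of degree $k$. Writing $Q(z) = \sum_i a_i z^i = \sum_i (a_i/\sqrt{\binom{k}{i}})\,(\sqrt{\binom{k}{i}}\,z^i)$ and applying Cauchy--Schwarz together with the binomial identity $\sum_i \binom{k}{i}|z|^{2i}=(1+|z|^2)^k$ is essentially immediate. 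Setting $Q=\hat{P}-P$ and using the hypothesis $P(x)=0$ then yields the crucial bound $|\hat{P}(x)| < \varepsilon(1+|x|^2)^{k/2}$, which drives the whole argument.

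Next, I would turn this upper bound on $|\hat{P}(x)|$ into an upper bound on the distance from $x$ to the nearest root $y$ of $\hat{P}$. Factoring $\hat{P}(z)=c\prod_{j=1}^{k}(z-y_j)$ and using the logarithmic derivative identity
$$\frac{\hat{P}'(x)}{\hat{P}(x)}=\sum_{j=1}^k \frac{1}{x-y_j},$$
I bound the right-hand side in absolute value by $k/\min_j|x-y_j|$. Rearranging gives $\min_j|x-y_j| \leq k\,|\hat{P}(x)|/|\hat{P}'(x)|$, which combined with the previous step is exactly the first inequality in the theorem, with $y$ taken to be the minimizing root.

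For the second inequality I would replace $\hat{P}'(x)$ by $P'(x)$ up to a controlled error. The natural move is to prove a Bernstein-type estimate $|Q'(x)| \leq k\,B(Q)(1+|x|^2)^{(k-1)/2}$ by applying Cauchy--Schwarz to $Q'(z)=\sum_{j}(j+1)a_{j+1}z^j$ and using the combinatorial identity $\binom{k-1}{j} = \tfrac{j+1}{k}\binom{k}{j+1}$ to convert the resulting weighted sum back into the Bombieri norm of $Q$. Taking $Q=\hat{P}-P$, the smallness hypothesis forces $|\hat{P}'(x)-P'(x)| < |P'(x)|$, so $|\hat{P}'(x)|$ stays comparable to $|P'(x)|$ and the previous estimate transfers to the $P'(x)$ version.

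The main obstacle I anticipate is matching Beauzamy's constants exactly. Each Cauchy--Schwarz step carries some slack, and the replacement of $|\hat{P}'(x)|$ by $|P'(x)|$ at the end naturally introduces a factor that is not quite $1$; absorbing these so that the final constant is really $k$ rather than, say, $2k$, most likely requires either a slightly sharper Cauchy--Schwarz---perhaps with a weighting tuned to the specific point $x$---or a mild strengthening of the smallness condition on $\varepsilon$ that tightens the comparison between $\hat{P}'(x)$ and $P'(x)$. The logarithmic derivative step is otherwise essentially tight (equality holds when $x$ is equidistant from all $y_j$), so the delicate balancing is really all in the derivative comparison.
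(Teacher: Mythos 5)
First, a point of reference: the paper does not prove this statement at all --- it is quoted from Beauzamy \cite{Bea} and used as a black box in the proof of Theorem~\ref{1stthm} --- so your proposal is being measured against the literature rather than against an in-paper argument. Your proof of the first inequality is correct and complete: the Cauchy--Schwarz step $|Q(z)|\le B(Q)(1+|z|^2)^{k/2}$, via the identity $\sum_i\binom{k}{i}|z|^{2i}=(1+|z|^2)^k$, applied to $Q=\hat P-P$ gives $|\hat P(x)|<\varepsilon(1+|x|^2)^{k/2}$, and the logarithmic-derivative bound $|\hat P'(x)/\hat P(x)|\le k/\min_j|x-y_j|$ (with the trivial case $\hat P(x)=0$ handled separately) yields exactly the first display. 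Your Bernstein-type estimate $|Q'(x)|\le k\,B(Q)(1+|x|^2)^{(k-1)/2}$, obtained from $\binom{k-1}{j}=\tfrac{j+1}{k}\binom{k}{j+1}$, is also correct.

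The gap is in the final step, and it is not merely a matter of optimizing constants as your closing paragraph suggests. Under the stated hypothesis $\varepsilon<|P'(x)|/\bigl(k(1+|x|^2)^{(k-1)/2}\bigr)$, your derivative comparison gives $|\hat P'(x)-P'(x)|<|P'(x)|$ and hence only $|\hat P'(x)|>0$; it does not give $|\hat P'(x)|\ge|P'(x)|$, which is what you would need to swap the denominator in the first display. In fact the second display is false with these exact constants for general degree-$k$ polynomials: take $k=1$, $P(z)=z$, $x=0$, and $\hat P(z)=0.1z+0.4$. Then $B(\hat P-P)=\sqrt{0.16+0.81}=\sqrt{0.97}<\varepsilon:=0.99<1=|P'(0)|$, so the hypotheses hold, yet the unique zero of $\hat P$ is $y=-4$ while the second display would force $|y|<0.99$. (The first display is consistent here, since $|\hat P'(0)|=0.1$ makes its right-hand side $9.9$.) So no sharpening of Cauchy--Schwarz rescues the step; one must either tighten the threshold on $\varepsilon$ by a factor of $2$ (giving $|\hat P'(x)|\ge|P'(x)|/2$ and a factor $2$ in the conclusion) or impose additional structure such as monicity, as in Beauzamy's original formulation. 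For the paper's downstream use this is immaterial, since the theorem is only ever invoked up to implied constants, but your proposal as written does not establish the second inequality as stated.
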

\begin{Rem}
     If $P'(x)=0,$  then the theorem is void. Basically, it says that $|x-y|< \infty.$
Beauzamy~\cite{Bea} addressed this in a comment after the theorem.
\end{Rem}
\begin{proof}[Proof of Theorem \ref{1stthm}] We apply the above theorem to examine how close are roots of $\hat{P}$ to roots of $P.$ Using Lemma \ref{mainlemma} we know that if we have $\asymp \log(1/\delta) \varepsilon^{-2}$  samples from the mixture, then the Bombieri's norm of $\hat{P}- P$ is bounded by 
\begin{align*}
 B(\hat{P}- P)\ll \varepsilon \Big(\sum_{i=0}^{k} \frac{k^2e^{(i/\sigma)^2}(2\sigma)^{2i}}{{k \choose i}}\Big)^{1/2} \ll \varepsilon k e^{0.5(k/\sigma)^2}(2\sigma)^k.
\end{align*}
Therefore by Beauzamy's theorem, if $$ \varepsilon < \frac{\prod_{j \neq m} (\mu_m- \mu_j)}{k^2(1+|\mu_m|^2)^{(k-1)/2}(2\sigma)^k e^{0.5(k/\sigma)^2}}$$
there exist a root $\hat{\mu}_m$ of $\hat{P}$ such that 
\begin{equation}
\label{lsteq}
|\hat{\mu}_m- {\mu}_m| < \frac{k^2(1+|\mu_m|^2)^{k/2}(2\sigma)^k e^{0.5(k/\sigma)^2}}{\prod_{j \neq m} (\mu_m- \mu_j)} \varepsilon.
\end{equation}
This gives us the proof of Theorem \ref{1stthm}.
\end{proof}

\begin{proof}[Proof of corollary \ref{cor1}] First note that in the statement of Beauzamy's theorem and the proof of Theorem \ref{1stthm}, we used $\varepsilon$ to indicate the distance between coefficients of polynomial. Going forward we use $\epsilon$ to indicate the accuracy we expect the statement of the corollary. \\

To get Corollary \ref{cor1}, LHS of \eqref{lsteq} must be smaller than the accuracy : 
$$|\hat{\mu}_m- {\mu}_m| < \epsilon,$$ for all $m.$  

In order to have this, $\varepsilon$ in the RHS of \eqref{lsteq}, must be smaller than $\epsilon$. Therefore we must have $$ \varepsilon = \epsilon \frac{\prod_{j \neq m} (\mu_m- \mu_j)}{k^2(1+|\mu_m|^2)^{k/2}(2\sigma)^k e^{0.5(k/\sigma)^2}}.$$
Number of samples we need to get this accuracy is $$\gg \log(\delta^{-1})\varepsilon^{-2}= \log(\delta^{-1}) \epsilon^{-2} \bigg|\frac{\prod_{j \neq m} (\mu_m- \mu_j)}{k^2(1+|\mu_m|^2)^{k/2}(2\sigma)^k e^{0.5(k/\sigma)^2}} \bigg|^{-2}.$$
We use $$1+ \mu_m^2 \ll 1+ k\sigma^2,$$ and we set  $$ c(\sigma, k)= \big|k^2(1+k\sigma^{2})^{k/2}(2\sigma)^k e^{0.5(k/\sigma)^2}\big|^2$$

Therefore, the sample complexity is 
\begin{equation}
\label{sampcomp}
    \log(\delta^{-1})  \frac{c(\sigma, k)}{\big|\mathcal{P}_m\big|^{2}} \epsilon^{-2}.
\end{equation} We complete the proof of the corollary  by taking the minimum over $m$.
\end{proof}
\begin{proof}[Proof of corollary \ref{cor0}] 

We assumed $\min_{i \neq j}(\mu_i - \mu_j) > \epsilon,$ therefore applying Corollary \ref{cor1}, we have that  $\min_{m} \mathcal{P}(\mu_m)> \epsilon^{k-1}.$  This indicates that the denominator in \eqref{sampcomp} is bigger than $\epsilon^{k-1}.$ We would like to learn our parameters with $\epsilon/100$-accuracy. Therefore the number of samples we need is $$10^4\log\big(\frac{1}{\delta}\big)\epsilon^{-2k+2}{\displaystyle{ c(\sigma, k)}}\hspace{1 mm} \epsilon^{-2}.$$
This completes the proof.  
\end{proof}
\textbf{Description of the algorithm.} 
Given samples from the mixture, we first compute the empirical moments 
$\hat{M}_1, \ldots, \hat{M}_k$ as in \eqref{hatM}.  
Using \eqref{Phat}, we then recover $\hat{P}_1, \ldots, \hat{P}_k$.  
Next, applying \eqref{e-ehat}, we recursively compute $\hat{e}_1, \ldots, \hat{e}_k$, 
which give the coefficients of the empirical parameter polynomial.  
Finally, we apply a root-finding algorithm for univariate polynomials—such as the Aberth method~\cite{Al}—to approximate the component means.  
Theorem~\ref{1stthm} guarantees that the recovered roots are close to the true parameters of the mixture.

\section*{Acknowledgments} Acknowledgments. I thank Will Sawin for valuable input, and Omran Ahmadi, Hassan Ashtiani, Milad Barzegar, Moritz Hardt, and Abbas Mehrabian for their helpful comments. This work was supported by the Institute for Research in Fundamental Sciences (IPM).

\bibliographystyle{plain}
\bibliography{Stat_ML}
\bigskip
\noindent\textbf{Author Information:} \\
School of Mathematics, Institute for Research in Fundamental Sciences (IPM) \\
P.O. Box 19395-5746, Tehran, Iran \\
\texttt{aryan@ipm.ir}

\appendix
\section{Appendix}
\subsection{A Thought Experiment}

Consider the following question:

\begin{quote}
\emph{Which of the following two 7-component Gaussian mixtures is harder to learn from samples?}
\end{quote}
For the sake of our analysis, we assume that all covariance matrices in the mixture are equal to the identity, and that the means are real
. 

\textbf{Mixture A:} The means are placed so that for $i=1,2,3,4$, we have $\mu_{i+1} - \mu_i = \epsilon$, then $\mu_6 - \mu_5 = 1$, and finally $\mu_7 - \mu_6 = \epsilon$.

\begin{center}
\begin{tikzpicture}[scale=1]

\node at (3.5,1.6) {\textbf{Mixture A}};

\foreach \x/\label in {0/{$\mu_1$},1/{$\mu_2$},2/{$\mu_3$},3/{$\mu_4$},4/{$\mu_5$},6/{$\mu_6$},7/{$\mu_7$}} {
    \filldraw (\x,1.0) circle (2pt);
    \node[below] at (\x,1.0) {\label};
}
\draw[thick] (-0.5 ,1.0) -- (7.15,1.0);

\draw[decorate,decoration={brace,amplitude=5pt,mirror},yshift=-2pt]
(0,1.0) -- (1,1.0) node[midway,below=8pt] {$\epsilon$};
\draw[decorate,decoration={brace,amplitude=5pt,mirror},yshift=-2pt]
(1,1.0) -- (2,1.0) node[midway,below=8pt] {$\epsilon$};
\draw[decorate,decoration={brace,amplitude=5pt,mirror},yshift=-2pt]
(2,1.0) -- (3,1.0) node[midway,below=8pt] {$\epsilon$};
\draw[decorate,decoration={brace,amplitude=5pt,mirror},yshift=-2pt]
(3,1.0) -- (4,1.0) node[midway,below=8pt] {$\epsilon$};
\draw[decorate,decoration={brace,amplitude=5pt,mirror},yshift=-2pt]
(4,1.0) -- (6,1.0) node[midway,below=8pt] {$1$};
\draw[decorate,decoration={brace,amplitude=5pt,mirror},yshift=-2pt]
(6,1.0) -- (7,1.0) node[midway,below=8pt] {$\epsilon$};

\end{tikzpicture}
\end{center}

Mixture A has four consecutive $\epsilon$-sized gaps, followed by a gap of length $1$, and then another gap of length $\epsilon$. The minimum gap is $\epsilon$.

\textbf{Mixture B:} The first four means satisfy $\mu_{i+1} - \mu_i = \epsilon$ for $i=1,2,3$, then $\mu_5 - \mu_4 = 1$, $\mu_6 - \mu_5 = \epsilon$, and finally $\mu_7 - \mu_6 = \epsilon^2$.

\begin{center}
\begin{tikzpicture}[scale=1]

\node at (3.3,1.6) {\textbf{Mixture B}};

\foreach \x/\label in {0/{$\mu_1$},1/{$\mu_2$},2/{$\mu_3$},3/{$\mu_4$},5/{$\mu_5$},6/{$\mu_6$},6.65/{$\mu_7$}} {
    \filldraw (\x,1.0) circle (2pt);
    \node[below] at (\x,1.0) {\label};
}
\draw[thick] (-0.5,1.0) -- (7.15,1.0);

\draw[decorate,decoration={brace,amplitude=5pt,mirror},yshift=-2pt]
(0,1.0) -- (1,1.0) node[midway,below=8pt] {$\epsilon$};
\draw[decorate,decoration={brace,amplitude=5pt,mirror},yshift=-2pt]
(1,1.0) -- (2,1.0) node[midway,below=8pt] {$\epsilon$};
\draw[decorate,decoration={brace,amplitude=5pt,mirror},yshift=-2pt]
(2,1.0) -- (3,1.0) node[midway,below=8pt] {$\epsilon$};
\draw[decorate,decoration={brace,amplitude=5pt,mirror},yshift=-2pt]
(3,1.0) -- (5,1.0) node[midway,below=8pt] {$1$};
\draw[decorate,decoration={brace,amplitude=5pt,mirror},yshift=-2pt]
(5,1.0) -- (6,1.0) node[midway,below=8pt] {$\epsilon$};
\draw[decorate,decoration={brace,amplitude=5pt,mirror},yshift=-2pt]
(6,1.0) -- (6.65,1.0) node[midway,below=8pt] {$\epsilon^2$};

\end{tikzpicture}
\end{center}
Mixture B has three consecutive gaps of length $\epsilon$, followed by a gap of length $1$, then another gap of length $\epsilon$, and finally a much smaller gap of length $\epsilon^2$. The minimum gap in this configuration is therefore $\epsilon^2$.\\

Back to our question: Which mixture is harder to learn from i.i.d. samples? Many would instinctively choose Mixture B, as it contains a dramatically smaller gap. In fact, when two Gaussian components have equal variances and their means are close, they become statistically hard to distinguish.\\

The same intuition appears in much of the literature, where sample complexity is commonly attributed to the minimum distance between component means. For example, Yang and Wu~\cite{annstat} derive sample complexity bounds that scale with inverse powers of $g_{\min}$, the minimum pairwise gap; see also \cite{MGA, Lili}. From this perspective, the $\epsilon^2$-sized gap in Mixture B suggests it should be the more difficult case.\\

Our analysis shows that this perception is incorrect: Mixture B is actually easier to learn.  
The reason is that the \emph{Pair Correlation Factor (PCF)} for Mixture A is substantially smaller.  
While Mixture B has one extremely small gap $\epsilon^2$, its other gaps are much larger, so the product 
$\mathcal{P}(\mu_m)$ remains relatively large.  
In contrast, Mixture A contains several consecutive $\epsilon$-sized gaps, causing the PCF to shrink to the order of $\epsilon^4$ (or smaller), which drives up the required sample complexity.  
Thus, despite the smaller $g_{\min}$ in Mixture B, it is Mixture A that is statistically harder to learn.

\end{document}